  \providecommand\BibTeX{{%
    \normalfont B\kern-0.5em{\scshape i\kern-0.25em b}\kern-0.8em\TeX}}}
\newtheorem{theorem}{Theorem}
\newtheorem{problem}{Problem}
\newtheorem{proposition}{Proposition}
\begin{document}
\fancyhead{}

\title{Diverse Rule Sets}

\author{Guangyi Zhang}
\email{guaz@kth.se}
\affiliation{%
  \institution{KTH Royal Institute of Technology}
  \city{Stockholm}
  \country{Sweden}
}
\author{Aristides Gionis}
\email{argioni@kth.se}
\affiliation{%
  \institution{KTH Royal Institute of Technology}
  \city{Stockholm}
  \country{Sweden}
}

\renewcommand{\shortauthors}{Zhang and Gionis.}

\begin{abstract}
While machine-learning models are flourishing and transforming many aspects of everyday life, 
the inability of humans to understand complex models poses difficulties for these models 
to be fully trusted and embraced.
Thus, interpretability of models has been recognized as an equally important quality as their predictive power.
In particular, rule-based systems are experiencing a renaissance owing to their intuitive if-then representation.

However, simply being rule-based does not ensure interpretability.
For example, overlapped rules spawn ambiguity and hinder interpretation.
Here we propose a novel approach of inferring diverse rule sets, 
by optimizing small overlap among decision rules 
with a 2-approximation guarantee under the framework of Max-Sum diversification.
We formulate the problem as maximizing a weighted sum of discriminative quality and diversity of a rule set.

In order to overcome an exponential-size search space of association rules, 
we investigate several natural options for a small candidate set of high-quality rules, 
including frequent and accurate rules, and examine their hardness.
Leveraging the special structure in our formulation, 
we then devise an efficient randomized algorithm, which samples rules 
that are highly discriminative and have small overlap. 
The proposed sampling algorithm analytically targets a distribution of rules that is tailored to our objective.

We demonstrate the superior predictive power and interpretability
of our model with a comprehensive empirical study against strong baselines.
\end{abstract}

\begin{CCSXML}
<ccs2012>
   <concept>
       <concept_id>10010147.10010257.10010293.10010314</concept_id>
       <concept_desc>Computing methodologies~Rule learning</concept_desc>
       <concept_significance>500</concept_significance>
       </concept>
   <concept>
       <concept_id>10010147.10010257.10010258.10010259.10010263</concept_id>
       <concept_desc>Computing methodologies~Supervised learning by classification</concept_desc>
       <concept_significance>500</concept_significance>
       </concept>
 </ccs2012>
\end{CCSXML}

\ccsdesc[500]{Computing methodologies~Rule learning}
\ccsdesc[500]{Computing methodologies~Supervised learning by classification}

\keywords{rule sets; diversification; rule learning; pattern mining; sampling; classifier}


\maketitle

\section{Introduction}\label{sec:intro}
There is a general consensus in the data-science community that interpretability is vital for data-driven models to be understood, trusted, and used by practitioners.
This is especially true in safety-critical applications, 
such as disease diagnosis and criminal justice systems~\cite{lipton2018mythos}.
Rule-based models have long been considered interpretable, 
because rules offer an intuitive representation of knowledge~\cite{han2011data}.
Rule-based models have been 
used as a popular proxy to decompose and explain other complex models~\cite{gilpin2018explaining}.

Some rule-based models are easier to interpret than others.
For example, rule sets (also known as DNF or AND-of-ORs) 
are generally considered easier to interpret than decision lists, 
due to a flatter representation~\cite{freitas2014comprehensible,lakkaraju2016interpretable}.
Recently, there has been an increasing interest in further enhancing interpretability of rule-set models.
Building on work to minimize model complexity~\cite{malioutov2013exact,su2015interpretable,dash2018boolean,wang2017bayesian},
Lakkaraju et al.\ develop Interpretable Decision Sets (\ids)~\cite{lakkaraju2016interpretable}, 
whose key property is small overlap among rules.
Since overlapping rules create ambiguity and need a conflict resolution strategy, 
it is rational to consider small overlap as a new effective criterion to improve interpretability.

It is worth noting that each path in a decision tree can also be seen as a decision rule.
Although these paths have zero overlap, they are strictly organized in a restricted form of a tree, 
and required to cover the entire dataset, which may not be realistic.
Thus, we mainly focus on direct rule set induction in this paper.

\begin{figure}[t] 
\centering
\captionsetup[subfloat]{farskip=2pt,captionskip=1pt}
\subfloat[]{
    \includegraphics[width =.22\textwidth]{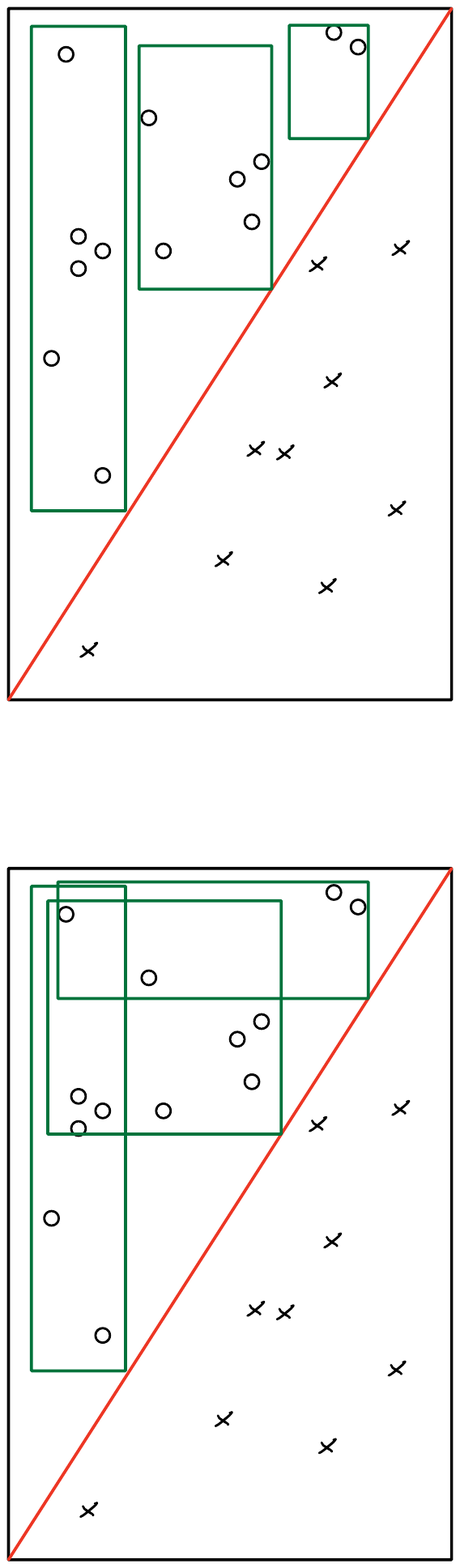}
	\label{fig:retangles:a}}
\subfloat[]{
    \includegraphics[width = .22\textwidth]{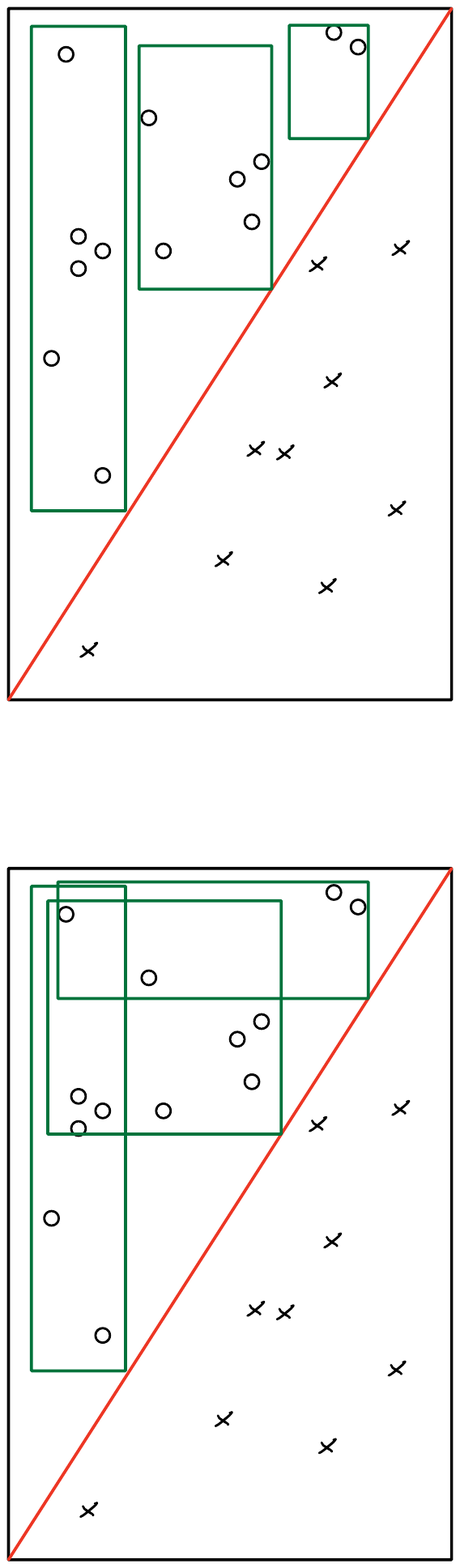}
	\label{fig:retangles:b}}
\caption{A toy example of two different rule decision sets, where a rule is represented by a green rectangle. 
Rule set {\bf (a)} has been produced by a sequential covering algorithm, 
while {\bf (b)} illustrates a diverse rule set.}
\label{fig:retangles}
\Description{A toy example of two different kinds of rule sets.}
\end{figure}

We illustrate the notion of small overlap by a toy example in Figure~\ref{fig:retangles}, 
where rules are represented by green rectangles.
A rule set generated by the popular sequential covering algorithm (\seqcov) \cite{han2011data} 
may consist of a set of largely overlapping rules, 
as the selection of a rule ignores completely the coverage of previous rules.
On the contrary, in an ideal case, an interpretable rule set 
should produce a set of disjoint rules, 
under which, the membership of each instance is unambiguous.
Such behavior is desirable, especially in multi-class classification.

In this paper, we relate small overlap among decision rules to another line of research, diversification~\cite{ravi1994heuristic,gollapudi2009axiomatic,borodin2012max}.
In particular, we concentrate on a type of diversification problems, called \maxsum or remote-clique diversification, where the diversity of a set is defined to be the sum of distances between every pair of points in the set.
\maxsum aims at a selection of points that maximize diversity, under a cardinality constraint.
When the distance function is a metric, there are known 2-approximation algorithms for the \maxsum problem.
Furthermore, an additional monotone non-decreasing submodular function of the selected set, 
referred to as a quality measure 
(also known as a relevance measure in information retrieval area), 
can be incorporated into the framework, with no loss in the approximation ratio.
Thus, diversification can be generalized to maximize a weighted sum of quality and diversity of a fixed-size~set.

Inspired by the diversification framework, 
we adopt a new viewpoint on interpretable rule sets,
where we define the problem as selecting a set of diverse decision rules from a large collection
of candidate rules.
Previous work, such as the \ids method, formulates the problem by using 
a non-normalized non-monotone submodular function, 
which can be approximated deterministically within a factor of 3 
(or within a factor of 2 by using randomization)~\cite{buchbinder2015tight}.
Instead, we propose a novel formulation, 
which can be approximated deterministically within a factor of 2, 
thanks to techniques borrowed from diversification~\cite{borodin2012max}.

Moreover, our approach enjoys several advantages over \ids, 
including superior performance, 
fewer hyper-parameters, 
having linear complexity with respect to the size of the candidate rule set 
(\ids has quadratic complexity), 
and an option to explicitly control the size of selected rules, 
which can be automatically determined, as well. 
To the best of our knowledge, this is the first work to apply a notion of diversification to rule-based~systems.

Our formulation consists of a weighted sum of two components, discriminative quality and diversity.
The objective is to maximize the sum under a cardinality constraint on the rule set.
The proposed quality function encourages a selection of discriminative rules, 
while diversity obviates unnecessary overlap among selected rules.
The selection procedure can be accomplished by a non-oblivious greedy algorithm with a 2-approximation guarantee~\cite{borodin2012max}.

Another contribution we make in this paper, is a sampling algorithm to tackle the long-standing difficulty in rule learning~\cite{furnkranz2012foundations}, or more specifically, in associative classification~\cite{liu1998integrating}, which is the exponential-size search space of candidate rules.
Associative classification classifies an unseen record using a small set of association rules, 
which is usually selected from a larger pre-mined candidate set.
An exemplar of associative classifier is \cba~\cite{liu1998integrating}.

Clearly, the success of associative classification heavily depends on the quality of the candidate set.
The most common approach is to accept a compromise solution of being restricted to frequent rules, which is adopted in \ids.
However, frequent rules are in general not a qualified candidate for an interpretable rule set, as they possess no discriminative power, and capture mostly commonly known, redundant and less interesting patterns.
For example, a frequent rule may consist of a set of uncorrelated frequent conditions (feature-value pairs).
Besides, it has been proved that counting frequent rules is intractable~\cite{gunopulos2003discovering}, let alone enumerating them, especially when conditions (items) are large and dense~\cite{bayardo1999constraint}.
We investigate several natural options for a candidate set, their effectiveness and their hardness, such as accurate rules.

We show that, unfortunately, 
for frequent~\cite{boley2008randomized} and accurate rules, almost-uniform sampling is infeasible.
Despite these negative observations, we harness the special structure of our formulation, 
and propose a novel and efficient sampling algorithm to effectively sample discriminative and less-overlapping rules from an exponential-size search space at each iteration.
The algorithm, inspired by a sampling technique developed by Boley et al.~\cite{boley2011direct}, 
analytically targets a sampling distribution over association rules, 
which is tailored for our optimization objective.

Our contributions are summarized as follows.
\begin{itemize}
	\item To the best of our knowledge, this is the first work to relate diversification to small overlap among decision rules to produce interpretable diverse rule sets.
	\item We propose a novel formulation for a diverse set of rules that are accurate, have small overlap, and can be solved efficiently with a 2-approximation guarantee.
	\item We investigate several options for a candidate rule set in an exponential-size search space, including frequent and accurate rules, and examine their effectiveness and hardness.
	\item We circumvent the difficulty in mining association rules from an exponential-size search space by proposing a novel and efficient randomized algorithm, which samples rules that are discriminative and have small overlap. 
	\item A comprehensive empirical study on various real-life datasets is conducted, and it shows superior predictive performance and excellent interpretability properties of our model against strong baselines.
\end{itemize}

The rest of the paper is organized as follows.
In Section~\ref{sec:related} we discuss the related work.
We delineate our proposal for the diverse rule-set problem in Section~\ref{sec:problem}.
We analyze the hardness of the proposed formulation and of different association-rule mining options in Section~\ref{sec:hardness}, followed by a presentation of our sampling algorithm in Section~\ref{sec:algorithm}.
We evaluate the performance of our algorithm in Section~\ref{sec:experiment}.
Finally, we present our conclusions in Section~\ref{sec:conclusion}.

\section{Related work}\label{sec:related}

\spara{Rule learning}.
Learning theory inspects rule learning from a computational perspective.
\citet{valiant1984theory} introduced PAC learning and asked whether polynomial-size DNF 
can be efficiently PAC-learned in a noise-free setting.
This question remains open, 
and researchers try to attack the problem in restricted forms,
however, these scenarios are less practical for real-world applications with noisy data.

Predominate practical rule-learning paradigms for rule sets 
include sequential covering algorithms~\cite{han2011data}, 
and associative classifiers~\cite{liu1998integrating}.
The former iteratively learns one rule at a time over the uncovered data, 
typically by means of generalization or specialization, 
i.e., adding of removing a condition to the rule body~\cite{furnkranz2012foundations}.
Popular variants include \cn~\cite{clark1989cn2} and RIPPER~\cite{cohen1995fast}.
Associative classifiers use association rules, 
which are usually pre-mined using itemset-mining techniques.
A set of rules is selected from candidate association rules via heuristics~\cite{liu1998integrating} 
or by optimizing an objective~\cite{lakkaraju2016interpretable,angelino2017learning,wang2017bayesian}.
Our method falls into the second paradigm.

For associative classifiers, initial methods select rules from a set of pre-mined model-independent candidates, 
with respect to, for example, frequency, confidence, lift, or other constraints, 
while more recent models embrace an integrated approach~\cite{dash2018boolean,malioutov2013exact}.
Both types of approaches suffer from a computational setback.
Pattern explosion easily renders the former family of methods infeasible, 
while optimization in the latter is inherently hard.
Our method lies at the middle of the two ends of the spectrum, 
where our rule generation is iteratively guided by the model.

\spara{Interpretable rule-based systems}.
Here we only discuss in\-ter\-pretabil\-i\-ty-related properties for rule sets.
Interested readers are referred to the comprehensive survey on interpretability by~\citet{freitas2014comprehensible}.
Most existing models characterize the interpretability of a rule set as having low model complexity.
Early work focuses on the sparsity of rules, 
i.e., a small number of conditions~\cite{malioutov2013exact,su2015interpretable}.
Recent work further considers the size of a rule set, i.e., 
a small number of short rules~\cite{dash2018boolean,wang2017bayesian}.
Lakkaraju et al.\ advocate small overlap among rules as a new criterion 
for interpretability~\cite{lakkaraju2016interpretable}.
Our model emphasizes small overlap, while implicitly taking other criteria into consideration.


\spara{Itemset mining}.
Frequent itemset mining (\fim)~\cite{agrawal1994fast} is one of the most well-known problems in data mining.
Two major drawbacks of frequent itemsets are pattern explosion and lack of interestingness.
A vast amount of literature can be roughly categorized into two groups.
The first group studies efficient data structures and algorithms for \fim 
itself or its condensed representations (bases), 
such as closed itemsets and maximal frequent itemsets.
The second group goes beyond frequency, and puts forth different interestingness measures or constraints 
for an individual pattern~\cite{cheng2007discriminative} or a set of patterns~\cite{knobbe2006pattern}.
A decision rule in our setting can be viewed as a labeled itemset.

\spara{Output-space sampling}.
One approach to tackle prohibitive output space of patterns is via sampling~\cite{chaoji2008origami}.
It is important to distinguish between input-space~\cite{toivonen1996sampling} 
and output-space sampling, where the former performs sampling on database instead of patterns.

\citet{chaoji2008origami} sample maximum frequent subgraphs via a randomization on extension of a path, 
starting from an empty edge set.
Boley et al.\ devise an efficient two-step sampling procedure by first sampling a data record and than sampling a subset of the record, 
so that samples follow a distribution proportional to several interestingness measures, 
such as frequency~\cite{boley2011direct,boley2012linear}.

Some existing works approach output-space sampling using Monte Carlo Markov Chains.
Al Hasan et al. simulate random walks on the frequent pattern partial order, 
and target different stationary distributions via well-designed transition matrices or Metropolis-Hastings algorithms~\cite{al2009musk,al2009output}.
\citet{boley2010formal} construct a sophisticated random work over a concept lattice, 
where each concept corresponds to a closed itemset.
However, none of these random walks guarantee convergence in polynomial time.


\spara{Diversification}.
The maximum dispersion problem was first studied by Ravi et al.~\cite{ravi1994heuristic}.
Gollapudi and Sharma~\cite{gollapudi2009axiomatic} 
turn it into a bi-objective optimization by incorporating a second quality objective.
Borodin et al.~\cite{borodin2012max} extend the formulation to allow a submodular quality function.
Our paper is the first to apply diversification to interpretable rule sets.

\spara{Hardness results in rule mining}.
Compared to the studies for efficient algorithms for association rule mining, 
little attention has been drawn to its computational complexity.
\citet{boros2002complexity} prove that it is \np-hard
to decide whether a given set of maximal frequent itemsets is complete.
\citet{gunopulos2003discovering} prove 
\sharpp-completeness for counting the number of frequent itemsets and 
\np-completeness of mining a \supp-support itemset of given length.
\citet{yang2004complexity} further proves \sharpp-completeness for counting maximal frequent itemsets.
Previous work concentrates on complexity of mining frequent or maximal frequent patterns, 
while our work extend the hardness result to mining accurate rules among labeled itemsets.

Another line of hardness results is directed at inapproximability of sampling and counting of patterns.
The seminal work of \citet{khot2004ruling} proves the inapproximability of Maximum Balanced Biclique (\maxbbip) up to a factor of $\text{size}(x)^\delta$ for instance $x$ and a constant $\delta>0$,
assuming a widely-believed assumption $\np \not\subseteq \cap_{\epsilon>0} \text{BPTIME}(2^{n^\epsilon})$.
Afterwards, \citet{boley2007approximating} introduces a direct reduction to Maximum Frequent Itemset (\maxfs).
Due to the polynomial equivalence between \maxfs and computing its cardinality \maxfc, 
\citet{boley2008randomized} extend the inapproximability to counting frequent itemsets, 
i.e., $|\freqitemset|$.
According to the polynomial equivalence between approximate counting and 
almost-uniform sampling~\cite{jerrum2003counting}, 
almost-uniform sampling of \freqitemset is intractable.
Based on previous work and a reduction from counting $|\freqitemset|$ to counting accurate rules $|\univ_\acc|$, 
we affirm the hardness of almost-uniform sampling of $\univ_\acc$.

\section{Problem formulation}\label{sec:problem}
We describe our method assuming that all features are binary, 
and thus, the input dataset can be seen as a set of labeled transactions.
However, our method is general and can be applied to data with categorical or numerical features.
To represent a dataset as a set of labeled transactions, 
each categorical feature is transformed into one-hot binary features, and 
numerical ones discretized into bins.
More details about binarization of numerical and categorical features 
will be discussed in Section \ref{sec:experiment}.

We are given a set of labeled data records 
$\dat= \{ (\datelem_1,\lbl_1),...,\allowbreak(\datelem_n,\lbl_n) \}$, 
where every data record (or transaction) 
$\datelem\subseteq\feat$ is represented as a vector of binary features, 
$\feat$ is the set of binary features (or items), and 
label $\lbl\in\lbluniv$ is a categorical variable, e.g.,  
$\lbl\in\{0,1\}$ in the binary setting.
The universal set of rules is denoted as $\univ=2^\feat\times\lbluniv$, 
and the set of selected rules as $\sel\subseteq\univ$.
A rule is thus composed of two components, a body and a head, i.e., a set of items and a label, respectively.
A data record $\datelem$ satisfies a rule $\elem$, or equivalently, a rule $\elem$ covers $\datelem$, if $\fbody(\elem) \subseteq \datelem$, also denoted as $\datelem\in\dat(\elem)$.
The rule set $\sel$ can be seen as a multi-class classifier as follows.
\eq{
\sel(\datelem) = 
\begin{cases} 
\flabel(\elem) \eqand \text{if } \text{exists } \elem\in\sel \st \datelem\in\dat(\elem) \eqnl
\text{default label} \eqand \text{otherwise}.
\end{cases}
}
\note[Guangyi]{Just a note here. If a method is a binary classifier, multi-class cannot be simply handled by one-versus-all technique, or otherwise diversity is lost.}
In case that there are several rules in \sel simultaneously covering \datelem, which is precisely the situation our method tries to minimize, we resort to a user-defined conflict resolution strategy, for example, choosing the most accurate rule.
Technically speaking, 
when a class-based ordering is used, \sel is a rule set; 
when a rule-based ordering is used, \sel is a decision list.

We briefly review notions of a set function and a metric here.
A set function $f: 2^U\to\mathbb{R}$ is monotone non-decreasing if $f(B)\ge f(A)$ for all $A\subseteq B\subseteq U$.
Function $f$ is submodular if it satisfies the ``diminishing returns'' property, which means $f(B\cup\{u\})-f(B) \le f(A\cup\{u\})-f(A)$ for $A\subseteq B\subseteq U$ and all $u\in U\setminus B$.
Function $f$ is called supermodular is $-f$ is submodular, and modular if $f$ is both submodular and supermodular.
A metric is a distance function $\dist: U\times U \to \mathbb{R}^+$ such that $\dist(u,u)=0$, $\dist(u,v)=\dist(v,u)$ and $\dist(u,w)\le\dist(u,v)+\dist(v,w)$ for all $u,v,w\in U$.

Before we introduce a formal definition for the problem, 
we will describe the quality and diversity functions for a set of rules.

\spara{Quality.} 
We define the quality of a rule set \sel 
to be the sum of discriminative scores over all rules \elem in $\sel$. 
More formally,
\eq{
\qualityset(\sel) \eqand= \sum_{\elem\in\sel} \quality(\elem), \eqnl
\quality(\elem) \eqand= 
	\sqrt{|\dats{\lbl}(\elem)|} \;
	\mathbbm{1}(\elem) \;
	\kl(P_{\dat(\elem)} || P_{\dat}) 
\label{eq:discrim},
}
\note[Guangyi]{\\
1. An alternative imbalance measure: $\left(\sum_{\pr{\lbl}\in\lbluniv} (|\dats{\pr{\lbl}}(\elem)|/Z(\elem))^2 - \sum_{\pr{\lbl}\in\lbluniv} (|\dats{\pr{\lbl}}|/Z)^2 \right)$, where $Z=|\dat|$, $Z(\elem)=|\dat(\elem)|$.\\
Sum of squares is minimized in a uniform distribution (by Cauchy Schwarz ineq).\\
\begin{align}
\frac 1n = \frac 1n (\sum_i a_i)^2 \le \sum_i a_i^2 \le \sum_i a_i = 1
\end{align}
Max when $a_i=1$ for some $i$, and min when $a_i=a_j$ for all $i,j\in[n]$.\\
2. Sqrt is mainly empirically set. I tried $|\dats{\lbl}(\elem)|$ and it is too big for most datasets.
}
where $\lbl=\flabel(\elem)$ is the class associated with rule \elem, 
$\dats{\lbl}(\elem)$ is the subset of \lbl-labeled data records covered by \elem, 
$\mathbbm{1}(\elem)$ is an indicator function, 
which is equal to 1 if $|\dats{\lbl}(\elem)|/|\dat(\elem)| - |\dats{\lbl}|/|\dat| > 0$ and 0 otherwise, 
$P_{\dat}$ is a distribution of different classes over \dat, 
and
$\kl(\cdot)$ is the KL divergence between two distributions.

In other words, 
the discriminative measure $\quality(\elem)$ 
is positive when the distribution of $\dat(\elem)$ 
deviates from the corresponding distribution on the entire dataset and 
the proportion of data records labeled by $\lbl$ in the cover of \elem is higher than 
the corresponding proportion of data records in the original dataset.

The value of $\quality(\elem)$ is high 
when the distribution $P_{\dat(\elem)}$ (data covered by \elem) is more imbalanced
than the distribution $P_{\dat}$ (the whole dataset), 
which indicates a strong discriminative ability.

Maximizing the KL divergence of an itemset is akin to another problem, 
Exceptional Model Mining (\emm) \cite{leman2008exceptional}, 
which is a generalization of subgroup discovery and 
finds a subgroup description that is substantially different from the complete dataset.
Note that the order of two distributions in KL divergence is important, because KL divergence is not symmetric.
With the chosen order, we encourage rules that take care of a rare class.
Furthermore, there exists a well-known trade-off between accuracy and coverage size in rule learning.
In order to encourage rules of large coverage, 
we multiply the KL divergence with the square root of $|\dats{\lbl}(\elem)|$.
A multiplication of discrimination and coverage can also be found in other popular rule learning methods, 
such as RIPPER~\cite{cohen1995fast}.
It is easy to see that  
$\qualityset(\sel)$ is a non-negative modular set function, and thus, 
monotone non-decreasing submodular.

\spara{Diversity.} 
Diversity is defined as a sum of pairwise distances of elements in the set.
The distance between two rules is defined to be the \emph{Jaccard distance} of their coverage over all data records. 
Here we overload the notation of $\dist(\cdot)$ with the distance function between any two rules, 
$\dist(\elem_i,\elem_j)$.
Note that the Jaccard distance is a metric, and 
maximizing the Jaccard distance captures well the notion of small overlap.
Unpleasant small redundant rules may also cause a high diversity value, but other components in our algorithm collaborate to prevent selecting such rules.
\eq{
\dist(\sel) \eqand= \sum_{\elem_i,\elem_j\in\sel} \dist(\elem_i,\elem_j),\eqnl
\dist(\elem_i,\elem_j) \eqand= 1 - \frac{|\dat(\elem_i) \cap \dat(\elem_j)|}{|\dat(\elem_i) \cup \dat(\elem_j)|}.
}
\note[Guangyi]{Caveat: Jaccord distance is likely to be small if one rule is very big, i.e., it enlarges the denominator.
E.g., $\dist(\sel)$ is small if a rule set has a big rule and many small rules that overlap with the big one.}

\spara{Problem definition.} 
We are now ready to formally define the problem of finding 
a diverse rule of sets,
which is the focus of this paper.

\begin{problem}[Diverse rule set (\dds)]
\label{problem:dds}
Given a set of labeled data records 
$\dat= \{ (\datelem_1,\lbl_1),...,(\datelem_n,\lbl_n)\}$, 
a budget \nsel on the number of rules, 
and a non-negative number \lambdaq, 
we want to find a set of rules \sel
that maximizes the \maxsum diversification function 
\eq{
\distq(\sel) = \qualityset(\sel) + \lambdaq\dist(\sel), \label{eq:obj}
}
over the space of rules \univ and under a cardinality constraint $|\sel|\le\nsel$.
Here, $\lambdaq$ is a user-defined hyper-parameter to control the trade-off between 
the quality function $\qualityset(\cdot)$ and the diversity function $\dist(\cdot)$.
\end{problem}

\spara{Discussion.}
In general, objective functions in the form of Equation~(\ref{eq:obj}) can be approximated within a factor 2 via a greedy algorithm \cite{borodin2012max}, 
provided the $\dist(\cdot)$ is a metric and $\univ$ can be enumerated in polynomial time.
We will address the problem of \univ later.

Maximizing Equation (\ref{eq:obj}) 
will give a desirable rule set in terms of our goal, 
i.e., accuracy and small overlap.
Though $\dist(\cdot)$ may potentially favor small-coverage rules 
since they are less likely to overlap, $\qualityset(\cdot)$ 
counteracts this effect by encouraging large-coverage rules.
Moreover, a rule sampling algorithm described in Section \ref{sec:algorithm} is designed to sample large-coverage candidate rules only among uncovered records, which simulates a similar rule discovery process as \seqcov.
Hence these two measures and the sampling algorithm collaborate to produce a rule set that is both accurate and diverse.

Our model is an inherent multi-class classifier.
For the sake of simplicity, we focus on binary classification in the rest of the paper, and discuss extensions to a multi-class case when needed.

We conclude this section with a clarification on decision rules in theory and in practice.
It is important to discern the discrete nature of a decision rule in our situation.
We exemplify decision rules in an itemset lattice in Figure \ref{fig:lattice}.
A decision rule whose body is an itemset covers all of its supersets on the lattice.
For example, the rule represented by the green oval has a body itemset ``AB'', and it covers both itemsets ``AB'' and ``ABC''.
Thus, in theory, every rule overlaps at the complete itemset (e.g., ``ABC'' in the figure).
Hence, an ideally ``disjoint'' rule set does not exist.
However, in a common case, multiple items are generated by binarization of one feature, so such a complete itemset never appears.

\begin{figure}[t] 
\centering
\includegraphics[width=.25\textwidth]{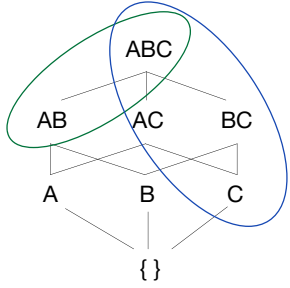} 
\caption{A perspective from a itemset lattice on decision rules. Each rule is represented by an oval.} \label{fig:lattice}
\Description{A perspective from a itemset lattice on decision rules.}
\end{figure}

\section{Hardness}\label{sec:hardness}
The hardness of the \dds problem originates from two aspects, 
($i$) optimizing the objective function in Equation (\ref{eq:obj}), and 
($ii$) mining the optimal rule with respect to the discriminative measure in Equation (\ref{eq:discrim}).
We justify our modeling decisions and along the way we introduce these hardness results.

First, optimizing exactly the objective function in Equation (\ref{eq:obj}) should not be expected, 
and a 2-approximation guarantee is proven to be tight under a standard complexity conjecture.
\begin{proposition}[\cite{borodin2012max}]
Optimizing a general diversification function in the form of Equation (\ref{eq:obj}) is \np-hard.
2-approximation is tight assuming that the planted clique conjecture is true.
\end{proposition}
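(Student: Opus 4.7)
Since the result is attributed to Borodin et al., my plan is to sketch the reduction rather than reprove it from scratch, and to flag the step I expect to be delicate. The unifying observation is that (\ref{eq:obj}) strictly generalizes pure \maxsum diversification: setting $\qualityset \equiv 0$ (or letting $\lambdaq$ dominate the quality scale) reduces optimizing (\ref{eq:obj}) over $|\sel|\le \nsel$ to $\max_{|\sel|\le \nsel} \sum_{\elem_i,\elem_j\in\sel}\dist(\elem_i,\elem_j)$, so any hardness for the latter transfers to (\ref{eq:obj}).

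For the \np-hardness claim, I would reduce from the decision version of Max-Clique (or, equivalently, Max-Independent-Set). Given a graph $G=(V,E)$, associate one ground element with each vertex and define the $\{1,2\}$-valued distance $\dist(u,v) = 2 - \mathbbm{1}[\{u,v\}\in E]$. The triangle inequality is automatic since all values lie in $\{1,2\}$. Then
\[
\sum_{\{u,v\}\subseteq S}\dist(u,v) \;=\; 2\binom{|S|}{2} - |E(S)|,
\]
where $E(S)$ is the edge set induced by $S$. Maximizing diversity over $|S|=k$ is equivalent to minimizing $|E(S)|$, i.e.\ finding the largest independent-set-like structure of size $k$; flipping the roles of $1$ and $2$ encodes Max-Clique directly. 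This establishes \np-hardness.

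For the tightness of the factor-$2$ bound under the Planted Clique conjecture, the approach is to reuse the $\{1,2\}$-embedding on a random graph $G(n,1/2)$ possibly containing a planted clique of size $n^{1/2+\epsilon}$. The objective identity above shows that planted instances admit solutions whose diversity approaches the trivial upper bound $2\binom{k}{2}$, whereas purely random instances have their optimum bounded away from it by a constant factor (the induced edge count on any $k$-subset concentrates). An algorithm beating the factor $2$ would therefore distinguish the two regimes, contradicting the conjecture.

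The main obstacle, and the part I would need to carry out carefully, is the second step: calibrating the random-versus-planted gap so that $(2-\delta)$-approximations provably translate into a detection advantage at the conjectured clique size. This requires quantitative concentration bounds on the minimum-edge $k$-subgraph of $G(n,1/2)$ and a gadget that amplifies the approximation gap into a clique-detection signal, while also verifying that the additive quality term can be absorbed without changing the ratio. The \np-hardness part, by contrast, is a routine gadget reduction.
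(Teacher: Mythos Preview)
The paper does not actually prove this proposition: it is stated with a citation to Borodin et al.\ and no argument is given, so there is no in-paper proof to compare against. Your instinct to treat it as a cited result and only sketch the reasoning is therefore exactly what the paper does.

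Your \np-hardness sketch is correct and is the standard reduction (originating with Ravi et al., on which Borodin et al.\ build): embedding a graph into a $\{1,2\}$-metric turns \maxsum into minimizing induced edges, which is \np-hard via Independent Set.

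Your planted-clique sketch, however, has a genuine direction error. With $\dist(u,v)=2-\mathbbm{1}[\{u,v\}\in E]$, a \emph{clique} has all pairwise distances equal to $1$, so its diversity is $\binom{k}{2}$, the \emph{minimum} possible, not ``approaching the trivial upper bound $2\binom{k}{2}$'' as you write. To make a planted structure witness high diversity you must either flip the embedding to $\dist(u,v)=1+\mathbbm{1}[\{u,v\}\in E]$ or, equivalently, plant an independent set (the two are the same in $G(n,1/2)$). Even after that fix, the naive gap you obtain is only $4/3$: a random $k$-subset of $G(n,1/2)$ has about $\tfrac{1}{2}\binom{k}{2}$ edges, giving diversity $\approx \tfrac{3}{2}\binom{k}{2}$ versus $2\binom{k}{2}$ for the planted set. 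You correctly flag that amplifying this to a full factor-$2$ lower bound is the delicate step, but the route actually goes through the Densest-$k$-Subgraph hardness implied by planted clique rather than the direct comparison you outline, and as written your planted/random roles are inverted.
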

Greedy and local-search algorithms are both identified to be capable of giving a 2-approximation guarantee.
However, the discriminative measure used in the first term of Equation (\ref{eq:obj}) poses a new difficulty to the approximation algorithms.
As we mentioned in Section \ref{sec:problem}, maximizing the KL divergence in the discriminative measure \quality is a special case of the Exceptional Model Mining (\emm) problem \cite{leman2008exceptional}, 
for which, to the best of our knowledge, there are only heuristic solutions so far.
Obviously, the difficulty lies in the exponential size of the search space \univ .
The approximation algorithms are thus unable to finish in polynomial time as they need to 
perform exhaustive search to find the most discriminative rule.
Therefore, 
to address this challenge, 
we replace the vast search space \univ with a smaller candidate set \cndd.

Most existing methods circumvent the intractable \univ by limiting \cndd to only frequent rules, whose coverage ratio in the dataset is above some threshold.
This strategy has two drawbacks.
First, it is already known that the output space of frequent rules can be exponentially large, i.e., the notorious pattern explosion phenomenon \cite{gunopulos2003discovering}.
In fact, it is \sharpp-hard to count the number of frequent itemsets.
This is true even for a condensed representation of frequent rules, such as maximal frequent itemset \cite{yang2004complexity}.
Second, frequent rules alone are unlikely to provide useful candidates for interpretability and classification purposes.
They usually capture the most commonly-known patterns and are not endowed with discriminative power.

Another natural idea is to permit only accurate rules.
We now show that this is also hard in general.
\begin{proposition}\label{prop:counting-acc}
The problem of counting the number of \acc-accuracy rules for a given $\acc\in[0,1]$, is \sharpp-hard.
\end{proposition}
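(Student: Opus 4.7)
The plan is a polynomial parsimonious reduction from counting $\sigma$-frequent itemsets, known \sharpp-hard (Gunopulos et al., as cited in the paper), to counting $\alpha$-accurate rules. Given an instance $(D, I, \sigma)$ of frequent itemset counting, I would construct a labeled dataset $D'$ by first placing each $t \in D$ into $D'$ with label $+$, and then appending $k_+$ copies of the universal transaction $(I, +)$ together with $k_-$ copies of $(I, -)$, where $k_+, k_-$ are nonnegative integers to be chosen. Because both kinds of dummy transactions contain all of $I$, every itemset $X \subseteq I$ covers all $k_+ + k_-$ of them, so the two rule accuracies take the closed forms
\[
\mathrm{acc}(X \to +) = \frac{|D(X)| + k_+}{|D(X)| + k_+ + k_-}, \qquad \mathrm{acc}(X \to -) = \frac{k_-}{|D(X)| + k_+ + k_-}.
\]

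Next I would pick $k_+, k_-$ so that (i) $\mathrm{acc}(X \to +) \geq \alpha$ is equivalent to $|D(X)| \geq \sigma$, and (ii) $\mathrm{acc}(X \to -) \geq \alpha$ admits no solution for $|D(X)| \geq 0$. A short computation rewrites (i) as $k_+ = \alpha k_-/(1-\alpha) - \sigma$ and (ii) as $k_+ > (1-\alpha) k_-/\alpha$; combining these yields the feasibility inequality $k_-(2\alpha - 1)/(\alpha(1-\alpha)) > \sigma$, which is satisfied by any sufficiently large $k_-$ whenever $\alpha > 1/2$. For any rational $\alpha > 1/2$, integer values of $k_+, k_-$ polynomially bounded in $\sigma$ and in the bit length of $\alpha$ can be produced explicitly, so $D'$ has polynomial size.

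Under this construction the $\alpha$-accurate rules in $D'$ are exactly $\{X \to + : |D(X)| \geq \sigma\}$, so their count equals the number of $\sigma$-frequent itemsets in $D$, which completes the parsimonious reduction and proves \sharpp-hardness. The main technical obstacle is the simultaneous handling of the two label orientations: padding only with $-$-labeled dummies contaminates the count with low-support itemsets via the $X \to -$ rules, so the decisive step is to pad with dummies of \emph{both} labels, which decouples the two accuracy thresholds and forces the negative-label rules out of the $\alpha$-accurate set entirely. The boundary regime $\alpha \leq 1/2$ is degenerate for binary classification (at $\alpha = 0$ every rule is accurate), and in any case suffices to show that the problem family, parameterized by $\alpha$, is \sharpp-hard.
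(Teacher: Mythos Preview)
Your proof is correct and shares the paper's core idea: reduce from counting $\sigma$-frequent itemsets by labeling all original transactions positive and appending universal dummy transactions so that accuracy becomes a monotone function of support. The paper's version is terser: it adds a \emph{single} negative universal record, sets $\alpha=\sigma/(\sigma+1)$, and observes that positive-head $\alpha$-accurate rules are exactly the $\sigma$-frequent itemsets.

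Where you differ is in generality and bookkeeping. The paper ties $\alpha$ to $\sigma$ and phrases the conclusion for ``positive $\alpha$-accuracy rules'' only, leaving the negative-head rules aside. Your construction, by padding with \emph{both} labels and solving for $k_+,k_-$, works for any rational $\alpha>1/2$ given as part of the input and simultaneously forces every negative-head rule below the threshold, so the reduction is parsimonious for the total rule count rather than just the positive-head count. That extra care buys you a cleaner correspondence at the cost of a slightly longer argument; the paper's single-dummy trick is shorter but implicitly restricts the head. Both establish \sharpp-hardness for the problem family, and both are silent (necessarily so, since $\alpha=0$ is trivial) about the regime $\alpha\le 1/2$.
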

\note[Guangyi]{We could not argue this statement for any $\acc\in[0,1]$. When $\acc=0$ the problem is trivial.}
\begin{proof}
We show that counting the number of \acc-accuracy rules is at least as hard as counting the number of \supp-support itemsets, for a given $\supp\in\mathbb{N}$.
The proof immediately follows the fact that counting the number of \supp-support itemsets is \sharpp-hard~\citep{gunopulos2003discovering}.

Given a set of $n$ transactions, we create a labeled dataset. 
We set all transactions to be positive data records and include one extra negative data record, i.e., $\dat= \{ (\datelem_1,1),...,(\datelem_n,1),(\datelem_{n+1},0) \}$.
In particular, the negative data record $\datelem_{n+1}$ includes all items.
If we were able to count the number of positive \acc-accuracy rules, by setting $\acc=\frac{\supp}{\supp+1}$, we would be able to count the number of \supp-support itemsets.
\end{proof}

One may suggest an alternative definition for \cndd, such as top-$k$ accurate rules.
Unfortunately, this is still very hard, and does not bear a resemblance to finding a top frequent rule, i.e., the empty itemset, because accuracy does not enjoy a handy monotone property as frequency does \cite{agrawal1994fast}.
The following result affirms the difficulty in finding accurate rules.
It is computationally hard to find the most accurate rule with a given rule size.
\note[Guangyi]{Though it seems to not exclude the possibility of finding the most accurate rule regardless of the rule size.}
\note[Guangyi]{An accurate itemset may not have any accurate itemset child. E.g., Adding any item into an accurate itemset may lead to empty coverage.}
\begin{proposition}
The problem of deciding if there exists a \acc-accuracy rule with at least $t$ items is \np-complete.
\end{proposition}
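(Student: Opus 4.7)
My plan is to establish both halves of the \np-completeness claim. Membership in \np is immediate: any candidate rule $(I,\lbl)$ with $|I|\ge t$ is a polynomial-size witness, and its accuracy can be verified by a single pass over $\dat$ in time $O(|\dat||\feat|)$.

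For \np-hardness I would reduce from the problem of deciding whether an unlabeled transaction dataset contains a \supp-support itemset of length at least $t$, shown \np-complete by Gunopulos et al., which is the same background result underlying Proposition~\ref{prop:counting-acc}. Given an instance $(\dat',\supp,t)$, I would reuse the construction from that proof: form $\dat=\{(\datelem,1):\datelem\in\dat'\}\cup\{(\feat,0)\}$ and set $\acc=\supp/(\supp+1)$. A positive rule $(I,1)$ then covers $\sigma(I)+1$ records---writing $\sigma(I)$ for the support of $I$ in $\dat'$---and is correct on $\sigma(I)$ of them, so its accuracy equals $\sigma(I)/(\sigma(I)+1)$ and reaches $\acc$ exactly when $\sigma(I)\ge\supp$. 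Consequently, the existence of a positive \acc-accurate rule of size at least $t$ in $\dat$ is equivalent to the existence of a \supp-support itemset of size at least $t$ in $\dat'$.

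The main obstacle will be ruling out spurious \emph{negative} rules, which the source problem does not constrain but which our problem allows. A negative rule $(I,0)$ on the same body has accuracy $1/(\sigma(I)+1)$, which meets the threshold only when $\sigma(I)\le 1/\supp$; for $\supp\ge 2$ this forces $\sigma(I)=0$, i.e., $I$ would have to be a size-$\ge t$ itemset that no original transaction in $\dat'$ contains. I would close this leak in one of two ways: (i) invoke Gunopulos et al.'s hardness under the mild promise that every size-$t$ itemset already has positive support in $\dat'$, which their biclique/clique-style construction can be arranged to satisfy; or (ii) pre-pad $\dat'$ with a small polynomial number of auxiliary transactions that give every size-$\ge t$ itemset nonzero support, while inflating $\supp$ by the same padding amount so that the set of \supp-support itemsets of size $\ge t$ is exactly preserved. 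Either patch keeps the reduction polynomial and equivalence-preserving, completing the \np-hardness argument.
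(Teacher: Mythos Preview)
Your argument is correct. The paper takes a nearby but formally different route: it reduces directly from Balanced Bipartite Clique (\bbc). Given $G=(V_1,V_2,E)$ and $k$, it creates $|V_2|$ items, one positive record per vertex of $V_1$ containing exactly that vertex's neighbors, and one all-item negative record; then it sets $\acc=k/(k+1)$ and $t=k$, so that a positive rule with body $I$, $|I|\ge k$, of accuracy at least $k/(k+1)$ forces at least $k$ positive records to contain $I$ and hence yields a balanced biclique of size $k$. Your construction is structurally the same --- a single all-item negative record appended to an all-positive dataset --- but you factor through the intermediate $\supp$-support-itemset problem of Gunopulos et al.\ and reuse the gadget from Proposition~\ref{prop:counting-acc}. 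Both routes are equally short; yours has the virtue of recycling the earlier proposition verbatim, while the paper's avoids the extra hop through Gunopulos et al.

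Where your treatment actually goes beyond the paper is the discussion of \emph{negative} rules. Read literally, the paper's reduction has the very leak you spotted: if some size-$k$ subset $I\subseteq V_2$ has no common neighbor in $V_1$, then the rule $(I,0)$ has accuracy $1$, and the constructed instance answers ``yes'' even when \bbc is a ``no''. Your option~(ii) closes this cleanly, and in fact a \emph{single} all-item auxiliary transaction suffices: it shifts every support up by one, so after replacing $\supp$ by $\supp+1$ the positive-rule equivalence is preserved, while every itemset now has support at least one and hence any negative rule has accuracy at most $1/2<(\supp+1)/(\supp+2)$ for $\supp\ge 1$. Option~(i), by contrast, is not obviously justified --- the standard \bbc-based instances carry no promise that every size-$t$ itemset has a witness --- so I would commit to~(ii) and state the one-transaction patch explicitly rather than leaving both alternatives open.
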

\begin{proof}
It is easy to see that this problem is in \np.
To show the \np-completeness, we prove via a polynomial time reduction from the Balanced Bipartite Clique problem (\bbc), which has been proved to be \np-complete \citep{garey2002computers}.
Given a bipartite graph $G=(V_1,V_2,E)$, a balanced clique of size $k$ is a complete bipartite graph  $\pr{G}=(\pr{V_1},\pr{V_2},\pr{E})$ with $|\pr{V_1}|=|\pr{V_2}|=k$ and $|\pr{E}|=k^2$.
The \bbc problem is to check the existence of a balanced clique of a given size $k$ in a given bipartite graph~$G$.

Given a bipartite graph $G=(V_1,V_2,E)$ and an integer $k$, 
we create a set of $|V_2|$ items, and a dataset consisting $|V_1|$ positive data records and one extra negative data record.
In particular, a positive data record contains only items that are connected to it, and the negative data record contains all items.
If we were able to find a \acc-accuracy rule with at least $t$ items, by setting $\acc=\frac{k}{k+1}$ and $t=k$, we would be able to find a balanced clique of size $k$.
This completes the reduction.
\end{proof}

We present the last hardness result that rules out the possibility of almost-uniform sampling of accurate rules.
\begin{proposition}
Approximate counting of \acc-accurate rules, i.e., $|\univ_\acc|$, and almost-uniform  sampling of $\univ_\acc$ are both hard, assuming $\np \not\subseteq \cap_{\epsilon>0} \text{BPTIME}(2^{n^\epsilon})$.
\end{proposition}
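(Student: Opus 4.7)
The plan is to extend the inapproximability chain that was sketched in Section~\ref{sec:related} --- Khot's hardness of \maxbbip~\cite{khot2004ruling}, Boley's reduction to \maxfs~\cite{boley2007approximating}, the polynomial equivalences \maxfs $\equiv$ \maxfc $\equiv$ counting $|\freqitemset|$~\cite{boley2008randomized}, and the Jerrum--Valiant--Vazirani equivalence between approximate counting and almost-uniform sampling for self-reducible structures~\cite{jerrum2003counting} --- by adding one more link: an approximation-preserving reduction from counting $|\freqitemset|$ to counting $|\univ_\acc|$. Once that link is in place, hardness of both approximate counting and almost-uniform sampling of $\univ_\acc$ follows immediately from the assumption $\np\not\subseteq\bigcap_{\epsilon>0}\text{BPTIME}(2^{n^\epsilon})$.

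For the new link I would reuse the construction of Proposition~\ref{prop:counting-acc} verbatim: given an instance for counting $\supp$-support itemsets over $n$ transactions, mark all $n$ transactions as positive, append a single negative record containing every item, and set $\acc=\supp/(\supp+1)$. The accuracy of a positive rule $(X,1)$ equals $s(X)/(s(X)+1)$, where $s(X)$ denotes the support of $X$ in the original dataset, so $(X,1)\in\univ_\acc$ iff $s(X)\ge\supp$. This is an \emph{exact} bijection between the positive stratum of $\univ_\acc$ and $\freqitemset_\supp$; the reduction is thus parsimonious on the class-conditional count, which is precisely the quantity one samples from in our algorithm (candidate rules are generated per class; cf.\ Section~\ref{sec:algorithm}).

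The main obstacle I expect to fight with is the negative stratum of $\univ_\acc$. For $\acc>1/2$ a rule $(X,0)$ is $\acc$-accurate only when the body has zero positive support, i.e., $X$ is not contained in any original transaction, so this contribution depends solely on the input and is itself related to frequent-itemset counting. I would argue that it is benign in two complementary ways: first, in our sampling framework we draw rules for a fixed class label at each call, so $|\univ_\acc^+|$ is the relevant count and the bijection above already yields an approximation-preserving reduction; second, even in the label-free formulation, a judicious padding --- for instance, appending $k=\Theta(\log n)$ dummy items that appear in every record --- multiplies both strata by the same factor $2^k$, leaving their ratio invariant and letting one recover $|\freqitemset_\supp|$ up to a polynomial blow-up in parameters. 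Verifying that this padding indeed preserves a $(1\pm\epsilon)$ relative error (rather than only a constant factor) is the most delicate part of the argument, because naive subtraction of approximate counts does not preserve relative error.

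Putting the pieces together, any FPRAS for $|\univ_\acc|$ would yield an FPRAS for $|\freqitemset|$, contradicting~\cite{boley2008randomized} under the stated complexity assumption; and the lack of an FPRAS together with self-reducibility of the rule lattice (accuracy depends only on coverage, and bodies live on the Boolean lattice of items) forbids an almost-uniform sampler for $\univ_\acc$ by the standard reduction of~\cite{jerrum2003counting}, establishing the claim.
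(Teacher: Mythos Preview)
Your approach is essentially identical to the paper's: the same chain Khot $\to$ Boley $\to$ inapproximability of $|\freqitemset|$ $\to$ Jerrum's counting/sampling equivalence, closed by the reduction of Proposition~\ref{prop:counting-acc}. You are in fact more careful than the paper, which does not discuss the negative-label stratum or self-reducibility at all; your first workaround (restricting to the per-class count $|\univ_\acc^{+}|$, which is precisely what Proposition~\ref{prop:counting-acc} establishes and how the sampler is invoked) is the right one and is implicitly what the paper relies on. Your padding idea, as you already suspect, does not help --- multiplying both strata by $2^k$ leaves you with the same subtraction problem --- so you can safely drop that branch.
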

\begin{proof}
The seminal work of \citet{khot2004ruling} proves the inapproximability of Maximum Balanced Biclique (\maxbbip) up to a factor of $\text{size}(x)^\delta$ for instance $x$ and some constant $\delta>0$, assuming a widely believed assumption $\np \not\subseteq \cap_{\epsilon>0} BPTIME(2^{n^\epsilon})$.
\citet{boley2008randomized} extend the inapproximability to counting frequent itemsets, i.e., $|\freqitemset|$.
According to the polynomial equivalence between approximate counting and almost-uniform  sampling \cite{jerrum2003counting}, almost-uniform  sampling of \freqitemset is intractable.
The hardness of approximate counting of accurate rules, i.e., $|\univ_\acc|$, and almost-uniform  sampling of $\univ_\acc$ follows the reduction from counting \freqitemset to counting $\univ_\acc$, 
as shown in Proposition \ref{prop:counting-acc}.
\end{proof}

\section{Algorithm}\label{sec:algorithm}
The major difficulty of approximating the objective lies in the exponential size of the search space \univ of rules.
A natural idea is to focus on a small candidate subset \cndd of \univ, so that our approximation algorithms can realize an approximation guarantee on a problem instance with an input set \cndd.
Clearly, the quality of the final rule set \sel depends on the quality of \cndd.
We have discussed several popular options for \cndd and their hardness in Section~\ref{sec:hardness}.
Unfortunately, neither frequent rules, nor accurate rules, nor any other candidates that rely on enumerating them are infeasible.

When enumeration is intractable, the simplest scheme is a premature abortion after gathering some desired number of candidates.
This approach is unable to assure representativeness of the chosen candidates, and inevitably introduces a bias that is closely related to the order of searching, such as DFS or BFS in the search of frequent itemsets.
Sampling offers an excellent solution to this dilemma, as randomization secures representativeness in a statistical sense.
However, as we show in Section~\ref{sec:hardness}, almost-uniform sampling of frequent or accurate rules is not possible.
Fortunately, these results do not eliminate the possibility of sampling rules 
from a distribution proportional to some interestingness measure of each rule, 
such as frequency \cite{boley2011direct}.

We devise a sampling distribution that is tailored to our objective, inspired by the mechanism of the greedy approximation algorithm.
In a greedy algorithm, for the first selected rule, the algorithm seeks a rule $\elem$ with the highest $\quality(\elem)$.
In later iterations, the algorithm tries to find a rule \elem, 
which not only enjoys a high $\quality(\elem)$ but it also causes less overlap with previous rules.
This amounts to finding a rule that has a high value for the following measure:
\eq{
\qualitys(\elem) \eqand= |\datplus_i(\elem)| 
\left(|\datminus_i|-|\datminus_i(\elem)|\right) 
\left(|\datun_i|-|\datun_i(\elem)|\right), \label{eq:obj-sample}
}
where $\flabel(\elem)$ is assumed to be positive, the index $i$ stands for the $i$-th iteration, 
$\datplus_i$ and $\datminus_i$ for uncovered positive and negative data records, 
respectively, and $\datun_i$ for covered data records by previously-selected rules.
The measure $\qualitys$ takes larger values for discriminative rules that have large coverage in 
$\datplus_i$ and small coverage in others.
The measure \qualitys thus shares similar properties with the discriminative measure in 
Equation (\ref{eq:discrim}), as they encourage both discriminative and large-coverage rules.

One key difference is that the measure \qualitys partitions the whole dataset into three parts instead of two, 
treating covered data records as a second undesirable ``negative'' dataset that a sampled rule tries to avoid covering.
Therefore, this sampling measure also takes diversity into account, 
and fits seamlessly into the diversification framework.
In the multi-class case, we can sample rules multiple times, 
each time with a different label being the positive label and all others the negative.
\note[Guangyi]{$\datplus_i,\datminus_i,\datun_i$ need to be disjoint.}

A remarkable sampling technique developed by \citet{boley2011direct} inspires us to sample a rule from a distribution proportional to Equation (\ref{eq:obj-sample}) very efficiently.
The main idea of the sampling technique is a two-step sampling procedure, where we first sample one or more data records, and then sample a rule from the power set of a combination of the sampled records.
In the work of \citet{boley2011direct}, they utilize this technique to sample frequent or discriminative rules, while we generalize it beyond two classes of data records in a meaningful application.
We present the proposed sampling algorithm in Algorithm \ref{alg:sampling}.
\begin{theorem}
Algorithm \ref{alg:sampling} samples rules from a distribution proportional to the measure 
in Equation (\ref{eq:obj-sample}).
\end{theorem}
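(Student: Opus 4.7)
The strategy is to recast $\qualitys(\elem)$ as the cardinality of a set of ``witness'' configurations, and then argue that the two-stage sampler in Algorithm~\ref{alg:sampling} produces a uniformly random witness, whose body marginal is therefore proportional to $\qualitys$.

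\textbf{Step 1 (witness identity).} Write $B=\fbody(\elem)$. Since $|\datplus_i(\elem)|=|\{x_+\in\datplus_i : B\subseteq x_+\}|$, while $|\datminus_i|-|\datminus_i(\elem)|$ and $|\datun_i|-|\datun_i(\elem)|$ count records in $\datminus_i$ and $\datun_i$ that do \emph{not} contain $B$, the product yields
\[
\qualitys(\elem)=\bigl|\{(x_+,x_-,x_u)\in\datplus_i\!\times\!\datminus_i\!\times\!\datun_i : B\subseteq x_+,\ B\not\subseteq x_-,\ B\not\subseteq x_u\}\bigr|.
\]
Denote this set by $\mathcal{W}(B)$ and let $\mathcal{T}=\{(x_+,x_-,x_u,B):(x_+,x_-,x_u)\in\mathcal{W}(B)\}$, so that $|\mathcal{T}|=\sum_B \qualitys(B)$ and the marginal of the uniform law on $\mathcal{T}$ on its fourth coordinate is precisely the target distribution.

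\textbf{Step 2 (two-stage uniform draw).} For a fixed triple $(x_+,x_-,x_u)$ the contributing bodies are exactly the subsets of $x_+$ that meet both $x_+\setminus x_-$ and $x_+\setminus x_u$. Inclusion--exclusion gives the count
\[
w(x_+,x_-,x_u)=2^{|x_+|}-2^{|x_+\cap x_-|}-2^{|x_+\cap x_u|}+2^{|x_+\cap x_-\cap x_u|},
\]
whose four terms are computable in time linear in the transaction width. Algorithm~\ref{alg:sampling} first draws a triple with probability $w(x_+,x_-,x_u)/Z$, where $Z=\sum_{(x_+,x_-,x_u)} w(x_+,x_-,x_u)$, and then draws $B$ uniformly among the $w(x_+,x_-,x_u)$ valid subsets of $x_+$. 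Chaining the two stages assigns probability $1/Z$ to every element of $\mathcal{T}$; marginalising over triples gives $\Pr[\fbody(\elem)=B]=|\mathcal{W}(B)|/Z=\qualitys(\elem)/Z$, as claimed. The label of $\elem$ is pinned to the class of $\datplus_i$ by construction, and the multi-class case is handled by repeating the procedure with each class playing the positive role.

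\textbf{Main obstacle.} Once the witness identity of Step 1 is in hand the probability calculation is transparent, so the delicate point is to show that the intra-triple subroutine indeed draws $B$ uniformly from the set defined by two non-containment constraints. A blind rejection from $2^{x_+}$ is wasteful whenever $|x_+\cap x_-|$ or $|x_+\cap x_u|$ is large; instead one applies a second inclusion--exclusion at the item level, using the four coefficients above to decide which of the regions $x_+\setminus x_-$ and $x_+\setminus x_u$ the body must intersect, forcing a non-empty uniform draw inside each chosen region, and filling the remaining part of $x_+$ uniformly. Verifying that this mirror of the Step-2 identity returns every valid $B$ with probability exactly $1/w(x_+,x_-,x_u)$ closes the argument.
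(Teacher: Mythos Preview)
Your Step~1 (the witness identity $\qualitys(\elem)=|\mathcal{W}(B)|$) is exactly the set $\mathcal{L}(\elem)$ the paper starts from, so that part matches. The gap is in Step~2: the triple weight you write,
\[
w(x_+,x_-,x_u)=2^{|x_+|}-2^{|x_+\cap x_-|}-2^{|x_+\cap x_u|}+2^{|x_+\cap x_-\cap x_u|},
\]
is \emph{not} the quantity that Algorithm~\ref{alg:sampling} computes. The algorithm sets $A=x_+\setminus x_-\setminus x_u$ and $C=x_+\setminus A$ and uses
\[
(2^{|A|}-1)\,2^{|C|}=2^{|x_+|}-2^{|x_+\cap(x_-\cup x_u)|},
\]
which differs from your $w$ whenever $(x_+\cap x_-)\setminus x_u$ and $(x_+\cap x_u)\setminus x_-$ are both non\-empty. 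Correspondingly, the intra-triple step in the algorithm does not draw uniformly from the set $\{B\subseteq x_+:B\not\subseteq x_-,\ B\not\subseteq x_u\}$ that your four-term inclusion--exclusion counts; it draws uniformly from the smaller set $\{B\subseteq x_+:B\cap A\neq\emptyset\}$, by concatenating a non\-empty uniform subset of $A$ with an arbitrary uniform subset of $C$. So your Step~2 and your ``main obstacle'' paragraph are analysing a sampler that is not Algorithm~\ref{alg:sampling}.

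The paper's own argument stays with the $(A,C)$ decomposition throughout: any body produced from a triple lies in $x_+$ and, because it contains an element of $A$, is automatically not a subset of $x_-$ nor of $x_u$; hence the triple must lie in $\mathcal{L}(\elem)$. The paper then cancels the triple weight against the size of the sample space $(\powerset(A)\setminus\{\emptyset\})\times\powerset(C)$ to obtain a contribution of $1$ per triple and sums over $\mathcal{L}(\elem)$. To align your write-up with the algorithm you are asked to verify, replace the four-term inclusion--exclusion by the two-term count $(2^{|A|}-1)2^{|C|}$ and check the uniform draw among bodies meeting $A$; the two-constraint subroutine you sketch in the final paragraph is then not needed.
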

\begin{proof}
Let $\mathcal{L}=\datplus\times\datminus\times\datun$.
For an arbitrary positive rule \elem, we specify a set of potential triples, from which \elem can be sampled.
\eq{
\mathcal{L}(\elem) = \{ \datelemplus,\datelemminus,\datelemun \in \mathcal{L} \mid \elem\subseteq\datelemplus,\elem\not\subseteq\datelemminus,\elem\not\subseteq\datelemun\}.
}
By focusing only on subsets of \datelemplus and in the meantime excluding subsets of $\datelemminus,\datelemun$, we ensure that \elem has a chance of being sampled only in triples in $\mathcal{L}(\elem)$.
The cardinality of $\mathcal{L}(\elem)$ coincides with its measure in Equation (\ref{eq:obj-sample}).
\eq{
|\mathcal{L}(\elem)| = 
|\datplus(\elem)|
\left(|\datminus_i|-|\datminus_i(\elem)|\right)
\left(|\datun_i|-|\datun_i(\elem)|\right) = 
\qualitys(\elem).
}
Therefore, we only need to sample a triple $(\datelemplus,\datelemminus,\datelemun)$ from a record distribution with a probability proportional to 
$2^{|\datelemplus \setminus (\datelemplus\setminus\datelemminus\setminus\datelemun)|}$
$(2^{|\datelemplus\setminus\datelemminus\setminus\datelemun|}-1)$, 
and then uniformly sample a concatenation of two random elements from $\powerset(\datelemplus\setminus\datelemminus\setminus\datelemun) \setminus \emptyset$ and $\powerset(\datelemplus \setminus (\datelemplus\setminus\datelemminus\setminus\datelemun))$, respectively, to ensure that every valid \elem in this triple is exposed equally within this triple and across different triples.
Let \elemr denote a random variable of a rule sampled from the sampling distribution in Algorithm \ref{alg:sampling}.
\begin{align*}
& \Pr(\elemr = \elem) \\ 
& \propto \sum_{(\datelemplus,\datelemminus,\datelemun)\in\mathcal{L}(\elem)} \frac{(2^{|\datelemplus\setminus\datelemminus\setminus\datelemun|}-1) 2^{|\datelemplus \setminus (\datelemplus\setminus\datelemminus\setminus\datelemun)|}}{|\powerset(\datelemplus\setminus\datelemminus\setminus\datelemun) \setminus \emptyset||\powerset(\datelemplus \setminus (\datelemplus\setminus\datelemminus\setminus\datelemun))|} \\
& = |\mathcal{L}(\elem)|.
\end{align*}
\end{proof}

\begin{algorithm}
\raggedright
\small
\caption{Sampling candidate rules}\label{alg:sampling}
\hspace*{\algorithmicindent} \textbf{Input:}
Uncovered positive dataset \datplus,
uncovered nagative dataset~\datminus,
covered dataset \datun, and
number of samples \nsamp.
\\
\hspace*{\algorithmicindent} \textbf{Output:} 
A set of sampled rules $\samp=\{\elem_1,\ldots,\sel_{\nsamp}\}$.
\begin{algorithmic}[1]
\Function{SampleCandidateRules}{$\datplus,\datminus,\datun,\nsamp$}
	\State $s\gets 0$
	\For {$(\datelemplus,\datelemminus,\datelemun)$ in $\datplus\times\datminus\times\datun$} \label{step:product}
	\State $\text{idx} \gets (2^{|\datelemplus\setminus\datelemminus\setminus\datelemun|}-1) * 2^{|\datelemplus \setminus (\datelemplus\setminus\datelemminus\setminus\datelemun)|}$
	\If{$\text{idx}=0$}
		\State Continue
	\EndIf
	\State Associate tuple $(\datelemplus,\datelemminus,\datelemun)$ with tuple $(s, s + \text{idx})$
	\State $s \gets s + \text{idx}$
	\EndFor
	
	\For {$j=1,\ldots,\nsamp$} 
	\State Generate a random integer $r$ from $[0,s)$ u.a.r.
	\State Find the tuple $(\datelemplus,\datelemminus,\datelemun)$ associated with the tuple $(a,b)$ such that $r\in [a,b)$
	\State Sample $\elem_j$ from $(\powerset(\datelemplus\setminus\datelemminus\setminus\datelemun) \setminus \emptyset) \times \powerset(\datelemplus \setminus (\datelemplus\setminus\datelemminus\setminus\datelemun))$ u.a.r.
	\EndFor
	
	\State {\bf Return} $\samp=\{\elem_1,\ldots,\sel_{\nsamp}\}$
\EndFunction
\end{algorithmic}
\end{algorithm}

\smallskip
With all components ready, we now present the full learning algorithm in Algorithm \ref{alg:decisionset}.
Running the greedy algorithm over a different candidate set in each iteration does not guarantee an approximation ratio.
Thus, we run the algorithm a second time over the full candidate set $\cndd_a$.
The set $\cndd_a$ does not pose a problem because small redundant rules are unlikely to be sampled in the first place.
We compare both rule sets from the first and second runs in experiments.
For the ease of exposition, we only sample positive rules in the algorithm, and it is natural to accept the label for the majority class in the dataset as the default label.
In practice, it is also recommended to sample rules for each class, and take the under-represented class as the default class.
\begin{theorem}
Algorithm \ref{alg:decisionset} is a 2-approximation algorithm with respect to the objective function in 
Equation (\ref{eq:obj}) over the search space of all sampled rules.
\end{theorem}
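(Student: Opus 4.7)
The plan is to reduce the claim to the known 2-approximation result for Max-Sum diversification of Borodin et al.\ already cited in the paper. Concretely, once the candidate pool $\cndd_a$ of all sampled rules is fixed (as produced in the first pass of Algorithm~\ref{alg:decisionset} and then frozen before the second pass), the instance at hand is precisely an instance of weighted Max-Sum diversification on a finite ground set: maximize $\qualityset(\sel)+\lambdaq\,\dist(\sel)$ with $|\sel|\le\nsel$ over $\sel\subseteq\cndd_a$.

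First I would verify the two structural hypotheses needed to invoke Borodin et al. The quality function $\qualityset(\sel)=\sum_{\elem\in\sel}\quality(\elem)$ is a nonnegative sum of per-rule terms, hence modular and in particular monotone non-decreasing submodular, as already observed after Equation~(\ref{eq:discrim}). For the diversity term, I would note that the pairwise distance is the Jaccard distance on the coverage sets $\dat(\elem)\subseteq\dat$; since Jaccard distance on subsets of a finite universe is a well-known metric (it satisfies non-negativity, symmetry, identity, and the triangle inequality), the function $\dist(\sel)=\sum_{\elem_i,\elem_j\in\sel}\dist(\elem_i,\elem_j)$ is exactly the remote-clique diversity on a metric space. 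These are the exact hypotheses under which Borodin et al.\ prove that the non-oblivious greedy (at each step pick the element maximizing the marginal increase of $\qualityset(\cdot)+\lambdaq\,\dist(\cdot)$) yields a 2-approximation.

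Next I would point out that the second pass of Algorithm~\ref{alg:decisionset}, which is explicitly run over the full accumulated candidate set $\cndd_a$, is exactly this non-oblivious greedy procedure applied to the instance with ground set $\cndd_a$. The authors already remark that running the greedy over a different candidate set in each iteration does not preserve the guarantee, which is precisely why the second pass over the fixed $\cndd_a$ is needed; this reframing makes the algorithm fall under the scope of the cited theorem without modification. The 2-approximation guarantee is therefore with respect to the optimum $\max_{\sel\subseteq\cndd_a,\,|\sel|\le\nsel}\distq(\sel)$, i.e.\ over the search space of all sampled rules, which is exactly what the theorem statement claims (and not a global guarantee on $\univ$).

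The main (and only real) obstacle is a bookkeeping one: making sure that $\dist(\cdot)$ truly inherits the metric property from Jaccard (trivial, but must be stated) and that the per-rule $\quality(\elem)$ is nonnegative so that $\qualityset$ is indeed monotone submodular on $2^{\cndd_a}$. Nonnegativity follows from the indicator factor $\mathbbm{1}(\elem)$ in Equation~(\ref{eq:discrim}), which zeroes out $\quality(\elem)$ whenever the KL term would be paired with a rule that under-represents class $\lbl$; combined with $\sqrt{|\dats{\lbl}(\elem)|}\ge 0$ and $\kl(\cdot\|\cdot)\ge 0$, this gives $\quality(\elem)\ge 0$. Once these two observations are in place, the theorem follows by a one-line appeal to the Borodin et al.\ guarantee applied to $(\cndd_a,\dist,\qualityset,\nsel)$.
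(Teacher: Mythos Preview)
Your proof is correct and follows essentially the same approach as the paper's own proof: both reduce to the Borodin et al.\ 2-approximation for Max-Sum diversification on the frozen candidate set $\cndd_a$, observing that the second pass of Algorithm~\ref{alg:decisionset} is precisely their non-oblivious greedy, so that the returned (better) solution inherits the guarantee. Your version is more explicit in checking the structural hypotheses (nonnegativity/modularity of $\qualityset$ and the metric property of the Jaccard distance), but the core argument is identical.
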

\begin{proof}
The approximation guarantee of a non-oblivious greedy algorithm is proved in the work of \citet{borodin2012max}.
A difference in our case is that a new candidate set \cndd is sampled in each iteration.
In order to maintain an approximation guarantee, we have to run the greedy algorithm a second time in case there exists a better selection \sel over the full candidate set $\cndd_a$.
Thus the better solution between the first run and the second run maintains a 2-approximation guarantee of the algorithm.
\end{proof}

The running time of the sampling algorithm and the full algorithm is 
$\bigO(|\dat|^3 \nfeat + \nsamp(\nfeat + \log |\dat|))$ and
$\bigO(\nsel(|\dat|^3 \nfeat + \nsamp(\nfeat + \log |\dat|)) + \nsel^2\nsamp|\dat| + \nsel^2\nsamp (\nsel|\dat|))$, respectively, where \nsamp is the number of sampled rules in each iteration, and $\nsel=|\sel|$.
A major downside is the cubic running time to the size of the dataset, 
which originates from a preprocessing step (step \ref{step:product}) 
that enumerates the Cartesian product of three partitions of the dataset.

An MCMC technique can be utilized to reduce the cubic complexity to a linear one \cite{boley2012linear}.
For simplicity, the sampling algorithm can be run upon a small sample dataset instead of the full dataset.
Note that a sampled dataset change the absolute magnitude of the sampling measure \qualitys, but the relative ratio remains the same.
For example, feeding a sample $\pr{{\datminus}}$ from \datminus to Algorithm \ref{alg:sampling} samples with respect to the following measure:
\eq{
\pr{\qualitys}(\elem) \eqand= |
\datplus(\elem)|
\left(|\pr{{\datminus}}|-|\pr{{\datminus}}(\elem)|\right),
}
where $\expt[|\pr{{\datminus}}|-|\pr{{\datminus}}(\elem)|] \propto \ndatminus-|\datminus(\elem)|$.
We can further reduce the cost by resorting to an alternative sampling measure,
\eq{
\qualityss(\elem) = |\datplus_i(\elem)| 
\left(|\datminus_i|-|\datminus_i(\elem)| + |\datun_i|-|\datun_i(\elem)|\right) \label{eq:obj-sample2},
}
which shares similar characteristics with the measure \qualitys, 
and can be sampled in a similar way, while requires only quadratic complexity.
In this paper we adopt the measure in Equation (\ref{eq:obj-sample2}) for our experiments.
With these efficient extensions, assuming the sampled dataset is at most of size $c$, 
the complexity reduces to $\bigO(c^3 \nfeat + \nsamp(\nfeat + \log c))$ and
$\bigO(\nsel(c^3 \nfeat + \nsamp(\nfeat + \log c)) + \nsel^2\nsamp|\dat| + \nsel^2\nsamp (\nsel|\dat|))$.

\begin{algorithm}
\raggedright
\small
\caption{Diverse rule set (\dds)}\label{alg:decisionset}
\hspace*{\algorithmicindent} \textbf{Input:}
Positive dataset \datplus,
nagative dataset \datminus,
\\
\hspace*{\algorithmicindent} \textbf{Output:} 
A set of decision rules $\sel=\{\elem_1,\ldots,\sel_{\nsel}\}$.
\begin{algorithmic}[1]
\State $\sel\gets\{\}, \cndd_a\gets\{\}$
\Repeat
	\State $\datun = \datplus(\sel) \cup \datminus(\sel)$
	\State $\cndd = \Call{SampleCandidateRules}{\datplus\setminus\datun,\datminus\setminus\datun,\datun,\nsamp}$
	\State Select a rule $\elem^* = \arg\max_{\elem\in\cndd} \frac 12 \qualityset(\sel\cup\{\elem\}) + \lambdaq \dist(\sel\cup\{\elem\})$
	\State $c \gets (\datplus(\sel\cup\{\elem^*\}) - \datplus(\sel)) / \datplus$
	\State $\sel\gets\sel\cup\{\elem^*\}$
	\State $\cndd_a\gets\cndd_a\cup\cndd$
\Until{$c\ge\epsilon$}
\State $\nsel = |\sel|, \pr{\sel}\gets\{\}$
\For{$j=1\ldots \nsel$}
	\State Select a rule $\elem^* = \arg\max_{\elem\in\cndd_a} \frac 12 \qualityset(\pr{\sel}\cup\{\elem\}) + \lambdaq \dist(\pr{\sel}\cup\{\elem\})$
	\State $\pr{\sel}\gets\pr{\sel}\cup\{\elem^*\}$
\EndFor
\If{ $\distq(\sel) < \distq(\pr{\sel})$}
	\State {\bf Return} $\pr{\sel}$
\EndIf

\State {\bf Return} $\sel$
\end{algorithmic}
\end{algorithm}

\section{Experiments}\label{sec:experiment}
We evaluate the performance of our algorithm from three different perspectives ---
sensitivity to hyper-parameters, predictive power, and interpretability.
Hyper-parameters in our algorithm can be determined in a straightforward way without additional tuning, as we describe in Appendix \ref{ap:param}.
The implementation of our model and all baselines is available online.%
\footnote{\href{https://github.com/Guangyi-Zhang/diverse-rule-set}{https://github.com/Guangyi-Zhang/diverse-rule-set}}

We compare our model \dds with three baselines --- \cba \cite{liu1998integrating}, \cn \cite{clark1989cn2}, and \ids \cite{lakkaraju2016interpretable}.
Models \cba and \ids use frequent rules as candidates.
\cn is learned in a sequential covering approach.
In particular, \ids optimizes small overlap among rules.
Therefore, the baselines serve as excellent opponents to examine two important aspects of our model: predictive power and diversity.
The metrics include balanced accuracy (bacc), ROC AUC (auc), average diversity (div), the number of distinct data records that are covered by more than one rule (overlap).
All metrics are averaged over multiple runs.

Various real-life datasets are tested against the models, which are listed in Table \ref{tab:dataset}.
In the data preprocessing phase, each categorical feature is transformed into one-hot binary features, and numerical ones are each uniformly cut into five bins.
We treat missing value as a separate feature value.
\begin{table}
  \caption{Datasets characteristics: $\text{n}=|\dat|$; $\text{ncls}=|\lbluniv|$;
  $\text{imbalance}=\max_{\lbl\in\lbluniv} |\dats{\lbl}| / \min_{\lbl\in\lbluniv} |\dats{\lbl}|$;
  and \nfeat is the size of binary features.}
  \label{tab:dataset}
\begin{tabular}{lrrrr}
\toprule
{} &        n & ncls	&    imbalance &  \nfeat \\
dataset    &          &        &        \\
\midrule
iris       &    150 &3	&   1.00 &   20 \\
contracept &   1473 &3	&   1.89 &   41 \\
cardio     &   2126 &3	&   9.40 &  126 \\
anuran     &   7195 &4	&  65.00 &  132 \\
avila      &  16348 &4	&   5.15 &   60 \\
\bottomrule
\end{tabular}
\end{table}

\subsection{Sensitivity to \lambdaq}
In this subsection, we investigate the effect of the key hyper-pa\-ram\-eter, the trade-off term \lambdaq between discriminative quality and diversity.
We fix other parameters, and examine the performance of our model under three different magnitudes for \lambdaq, directly calculated from the data (see Appendix \ref{ap:param}).
As shown in the Table \ref{tab:lambda}, best ROC AUC values of the testing datasets concentrate at larger values of \lambdaq, and the largest diversity always lies in the \lambstrict mode.
This confirms the fact that a moderate amount of diversity in a rule set improves their overall performance.

The two objectives, discriminative quality and diversity, are mutually beneficial to each other.
Their relationship is akin to that between relevance and non-redundancy in the information retrieval area.
Multiple similar discriminative rules create redundancy and do not enhance generalization of the decision-making system.
On the other hand, a higher diversity is at the expense of higher model complexity, i.e., a slightly larger number of rules and conditions per rule.
We will fix the mode to be \lambstrict for the rest of our experiments.

\begin{table}
  \caption{Sensitivity to \lambdaq.}
  \label{tab:lambda}
\begin{tabular}{llrrrrr}
\toprule
dataset & \lambdaq & $n_{\text{rules}}$ &  $n_{\text{conds}}$ &   bacc &    auc &   div \\
\midrule
iris	&0    & \bf{8.00} & \bf{1.62} &      0.87 &      0.90 &      0.66 \\
iris	&mean &      9.00 &      2.00 &      0.87 &      0.90 &      0.98 \\
iris	&max  &      8.67 &      2.02 & \bf{0.89} & \bf{0.92} & \bf{1.00} \\\midrule
contracept	&0    & \bf{3.33} & \bf{3.64} & \bf{0.36} & \bf{0.52} &      0.90 \\
contracept	&mean &      6.33 &      4.51 &      0.35 &      0.51 &      0.98 \\
contracept	&max  &      3.67 &      4.02 &      0.35 &      0.51 & \bf{0.99} \\\midrule
cardio	&0    & \bf{21.67} & \bf{7.85} &      0.50 &      0.66 &      0.96 \\
cardio	&mean &      45.67 &      8.67 & \bf{0.71} & \bf{0.78} & \bf{1.00} \\
cardio	&max  &      53.00 &      8.99 &      0.64 &      0.74 & \bf{1.00} \\\midrule
anuran	&0    & \bf{31.67} & \bf{8.52} &      0.50 &      0.70 &      0.87 \\
anuran	&mean &      41.33 &      8.87 & \bf{0.81} & \bf{0.88} & \bf{1.00} \\
anuran	&max  &      40.00 &      8.99 &      0.80 &      0.87 & \bf{1.00} \\\midrule
avila	&0    &      4.00 & \bf{4.72} & \bf{0.26} & \bf{0.51} &      0.59 \\
avila	&mean & \bf{3.67} &      4.98 & \bf{0.26} & \bf{0.51} & \bf{1.00} \\
avila	&max  & \bf{3.67} &      5.32 & \bf{0.26} & \bf{0.51} & \bf{1.00} \\
\bottomrule\end{tabular}
\end{table}

\subsection{Predictive power}
We discuss configurations for running predictive models in Appendix \ref{ap:param}.
One great advantage of our model \dds is that no hyper-parameter tuning is required.
Although it indeed has several hyper-parameters, all of them can be determined automatically.
The complete result is shown in Table \ref{tab:predictive}.
Overall, in terms of ROC AUC values, our model exceeds all baselines by a large margin in many datasets.
In some difficult datasets, our model stops early with only a few essential rules chosen, as the marginal gain of additional rules is too tiny.
We support this claim by conducting another experiment, the results of which are shown in Figure \ref{fig:k}, 
where the maximum number of selected rules is limited to different~values.
A general pattern is that our model selects the most influential and large-coverage rules at the beginning, which justifies the practice of manually setting a maximum number of rules if needed.
It is reasonable to conclude that our model fares well in a predictive task.

In terms of diversity, our model outperforms all baselines in all datasets.
Metric div may be biased and fails to reflect genuine diversity, especially when there are many small-coverage rules like in \cn.
Therefore, the metric overlap serves as a more qualified metric for diversity, and our model achieves significant lower overlap.
\cba selects from frequent rules, and overlaps heavily without any restriction.

Our main competitor \ids behaves rather unstable because of the difficulty in hyper-parameter tuning.
One strong point of \ids is its small number of rules and conditions per rule.
However, a low model complexity leads to poor predictive performance.
By contrast, our model is capable of fixing a small number of rules, while still obtaining good performance with very few rules, as demonstrated in Figure \ref{fig:k}.
Through comprehensive comparison, we can confidently assert that our formulation for a diverse rule set
offers a significant advantage over \ids.

\begin{table}
  \caption{Predictive power. \ddsone is the rule set obtained in the first run of Algorithm \ref{alg:decisionset}}
  \label{tab:predictive}
\small
\setlength{\tabcolsep}{.5\tabcolsep}
\begin{tabular}{llrrrrrr}
\toprule
dataset & model &   $n_{\text{rules}}$ &  $n_{\text{conds}}$ &   bacc &    auc &   div & overlap\\
\midrule
iris	&\cba   &      7.00 &      2.23 &      0.90 &      0.92 &      0.89 &     28.67 \\
iris	&\cn   &      5.67 &      1.22 &      0.90 &      0.92 &      0.88 &    115.33 \\
iris	&\ids   & \bf{3.00} & \bf{1.00} &      0.83 &      0.88 & \bf{1.00} & \bf{0.00} \\
iris	&\ddsone &     10.67 &      2.21 & \bf{0.93} & \bf{0.95} & \bf{1.00} & \bf{0.00} \\
iris	&\dds &     10.67 &      2.12 & \bf{0.93} & \bf{0.95} & \bf{1.00} &      3.00 \\\midrule
contracept	&\cba   &      16.67 &      3.03 &      0.36 &      0.52 &      0.76 &    987.00 \\
contracept	&\cn   &      72.67 &      4.68 &      0.38 &      0.53 &      0.98 &   1177.33 \\
contracept	&\ids   & \bf{12.67} & \bf{1.00} &      0.32 &      0.49 &      0.88 &   1178.00 \\
contracept	&\ddsone &      15.33 &      5.26 &      0.37 &      0.53 & \bf{1.00} & \bf{2.00} \\
contracept	&\dds &      16.67 &      5.20 & \bf{0.40} & \bf{0.55} & \bf{1.00} &      5.33 \\\midrule
cardio	&\cba   &    100.00 &      4.84 &      0.56 &      0.68 &      0.86 &    1424.00 \\
cardio	&\cn   &     48.00 &      3.92 &      0.41 &      0.56 &      0.98 &    1695.00 \\
cardio	&\ids   & \bf{3.00} & \bf{1.00} &      0.33 &      0.50 &      0.13 &    1377.00 \\
cardio	&\ddsone &     47.00 &      9.10 &      0.67 &      0.75 & \bf{1.00} & \bf{35.00} \\
cardio	&\dds &     48.00 &      9.34 & \bf{0.71} & \bf{0.78} & \bf{1.00} &      80.67 \\\midrule
anuran	&\cba   &     100.00 &      6.08 &      0.54 &      0.72 &      0.73 &    3408.00 \\
anuran	&\cn   &      88.67 & \bf{4.15} &      0.49 &      0.70 &      0.98 &    5745.67 \\
anuran	&\ids   &        nan &       nan &       nan &       nan &       nan &        nan \\
anuran	&\ddsone & \bf{38.67} &      9.10 &      0.78 &      0.85 & \bf{1.00} & \bf{59.33} \\
anuran	&\dds &      39.67 &      8.50 & \bf{0.82} & \bf{0.87} & \bf{1.00} &     149.67 \\\midrule
avila	&\cba   & \bf{1.00} & \bf{2.00} &      0.25 &      0.50 & \bf{1.00} & \bf{0.00} \\
avila	&\cn   &     99.67 &      4.06 & \bf{0.29} & \bf{0.53} &      0.98 &  13052.33 \\
avila	&\ids   &       nan &       nan &       nan &       nan &       nan &       nan \\
avila	&\ddsone &      3.33 &      5.00 &      0.26 &      0.50 & \bf{1.00} & \bf{0.00} \\
avila	&\dds &      3.00 &      5.42 &      0.26 &      0.50 & \bf{1.00} & \bf{0.00} \\
\bottomrule\end{tabular}
\end{table}

\begin{figure*} 
\centering
\includegraphics[width=.99\textwidth]{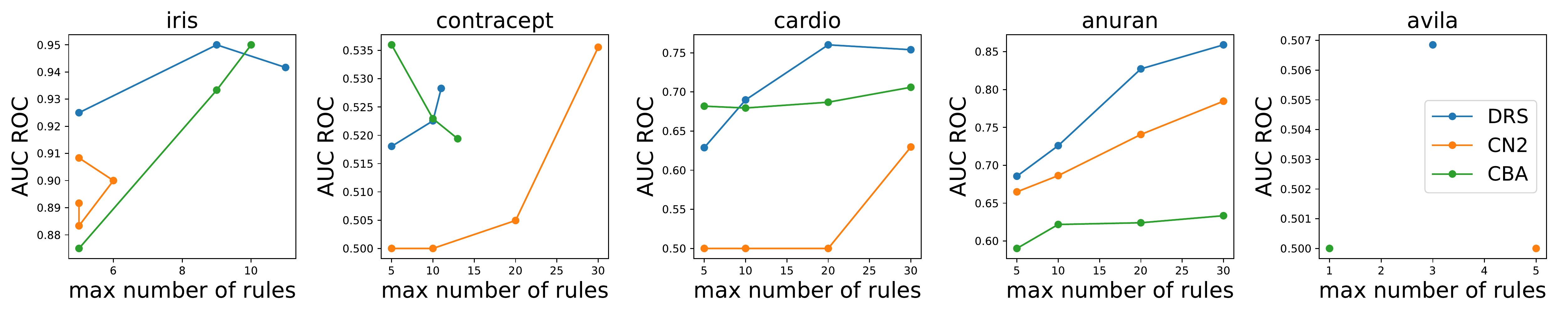} 
\caption{Predictive power with a limited number of rules.} \label{fig:k}
\Description{Predictive power with a limited number of rules.}
\end{figure*}

\subsection{Interpretability}

Interpretability is a very subjective criterion.
In our case, we try to quantify this quality via several dimensions.
It is common sense that a small number of rules, a small number of conditions and small overlap among rules facilitate interpretability.
Our model is excellent in small overlap.
The results in Figure \ref{fig:k} also verifies its qualification in the number of rules.
It is able to specify a small number of diverse rules, while still obtaining good predictive performance.
However, our model pays its price in the number of conditions.
This downside can be alleviated by a trade-off between model complexity and diversity.
When \lambdaq is relatively small, the selection of a new rule is not dominated by the diversity constraint.
Thus the discriminative measure in Equation (\ref{eq:discrim}) plays a leading role in choosing a rule, and it favors large-coverage rules, which implies a small number of conditions.

\section{Conclusion}\label{sec:conclusion}
In this work, we discuss desirable properties of a rule set in order to be considered interpretable.
Apart from a low model complexity, small overlap among decision rules has been identified to be essential.
Inspired by a recent line of work on diversification, we introduce a novel formulation for a diverse rule set that has both excellent discriminative power and diversity, and it can be optimized with an approximation guarantee.
Our model is indeed easy to interpret, as indicated by several interpretability properties.
Another major contribution in this work is an efficient sampling algorithm that directly samples decision rules that are discriminative and have small overlap, 
from an exponential-size search space, with a distribution that perfectly suits our objective.
Potential future directions include extensions to other notions of diversity.

\begin{acks}
We thank the UCI Machine Learning Repository \cite{Dua:2019} 
for contributing the datasets used in our experiments.
This research is supported by three Academy of Finland projects (286211, 313927, 317085),
the ERC Advanced Grant REBOUND (834862),
the EC H2020 RIA project ``SoBigData++'' (871042), and the
Wallenberg AI, Autonomous Systems and Software Program (WASP).
The funders had no role in study design, data collection and analysis, 
decision to publish, or preparation of the manuscript. 
\end{acks}

\bibliographystyle{ACM-Reference-Format}
\bibliography{references}

\newpage
\appendix
\section{Model parameters}\label{ap:param}
We propose several useful schemes to handle important hyper-parameters in our algorithm.
The number of selected rules $\nsel$ can be determined by a minimum marginal recall coverage $\epsilon$ (0.01 in our case) in the positive dataset.
In a multi-class case, the terminating criterion generalizes to every class of the dataset.
The trade-off term \lambdaq is required to be at least on the magnitude of the discriminative measure \quality to be able to enforce diversity.
The magnitude of \quality is dataset-dependent, and we estimate its magnitude by pre-sampling a set \samp of rules from the complete dataset using Algorithm \ref{alg:sampling}.
We design three intuitive modes for \lambdaq: \lambnone, \lambperm, and \lambstrict, with a value $0$, mean of quality \quality in~\samp, and maximum of quality \quality in~\samp, respectively, in an order of increasing diversity.
The number of sampled rules in each iteration, \nsamp, can be set rather freely.
A number larger than 100 gives a stable performance in our experiments.

\cba sets a minimum frequency to 0.1 and a minimum confidence to 0.3.
\cn searches rules with a beam width of 10 and a minimum covered examples of 15.
\ids is based on the public code released by its authors \footnote{\href{https://github.com/lvhimabindu/interpretable_decision_sets}{https://github.com/lvhimabindu/interpretable\_decision\_sets}}.
It has a high computational expense, which is quadratic to the number of candidate rules and is further exacerbated by slow convergence of local search.
Besides, it requires very heavy tuning, with 7 unbounded numerical hyper-parameters involved.
We set aside a validation set to tune its hyper-parameters each ranging from $[0,1000]$, as suggested in their paper \cite{lakkaraju2016interpretable}.
Since it relies on frequent rule mining, we are only able to set a minimum frequency to afford at most hundreds of candidate rules, due to its high complexity.
Therefore it inevitably falls short in some of larger datasets.

\end{document}